\documentclass[runningheads]{llncs}

\usepackage{eccv}

\usepackage{eccvabbrv}
\usepackage{arydshln}
\usepackage{graphicx}
\usepackage{booktabs}
\usepackage{enumerate}
\usepackage[accsupp]{axessibility}  

\usepackage{hyperref}

\usepackage{orcidlink}

\begin{document}

\title{On the Redundancy of Timestep Embeddings in Diffusion Models}
\titlerunning{On the Redundancy of Timestep Embeddings in Diffusion Models}
\author{José A. Chávez}

\authorrunning{J.~Chávez}
\institute{Independent Researcher, Lima, Peru \\
\email{josechavez.ie@gmail.com}} 

\maketitle

\begin{abstract}
Diffusion models rely heavily on explicit timestep embeddings to modulate the denoising process across various noise scales. In this work, we challenge the necessity of these temporal signals by analyzing their impact on U-Net and Diffusion Transformer architectures. Beyond empirical evidence, we provide a theoretical framework demonstrating that, under certain conditions, the global minimizer of the diffusion training objective can be achieved without explicit timestep conditioning. Our findings reveal a surprising robustness when timestep embeddings are completely removed. Extensive ablation studies on the CelebA and CIFAR-10 datasets show that these time-agnostic models can maintain high structural fidelity and even surpass their conditioned counterparts in competitive metrics, including FID, precision, and recall. Our analysis suggests these architectures can implicitly infer noise scales from the corrupted input under specific assumptions, rendering explicit temporal conditioning redundant. This study challenges long-standing temporal conditioning paradigms and paves the way for more efficient and structurally focused generative architectures.

\keywords{timestep embeddings}
\end{abstract}

\section{Introduction}
Diffusion Models~\cite{pmlr-v37-sohl-dickstein15} have shown significant success in generative modeling, synthesizing high-fidelity data through a gradual denoising process~\cite{DDPM_2020, Rombach_2022_CVPR,Blattmann_2023_CVPR,sauer2023adversarialdiffusiondistillation}. These models rely heavily on timestep embeddings, which serve to encode temporal information to adjust the noise removal level at each step. This information is typically injected via feature map addition or modulation across the model architecture.

Recently, transformer architectures have emerged as a powerful alternative to convolutional backbones in diffusion models. Diffusion Transformers (DiT)~\cite{Peebles_2023_ICCV}, based on the Visual Transformer architecture~\cite{dosovitskiy2021an}, provide several key advantages over traditional U-Net architectures. Chief among these is scalability; DiT demonstrates superior performance as model size and computational power increase, showing strong correlations between GFLOPs and sample quality. This allows for superior performance in high-resolution image generation and large-scale datasets. Furthermore, transformers excel at modeling long-range dependencies due to their self-attention mechanism, enabling better coherence in image generation by capturing relationships across distant regions of an image. DiTs outperform traditional U-Net backbones on ImageNet~\cite{5206848} at $512\times512$ and $256\times256$ resolution.

\begin{figure}[h]
\centering
\renewcommand{\arraystretch}{0.5} 
\begin{tabular}{ll}
    \rotatebox{90}{ \small DiT} & 
    \includegraphics[width=0.92\columnwidth]{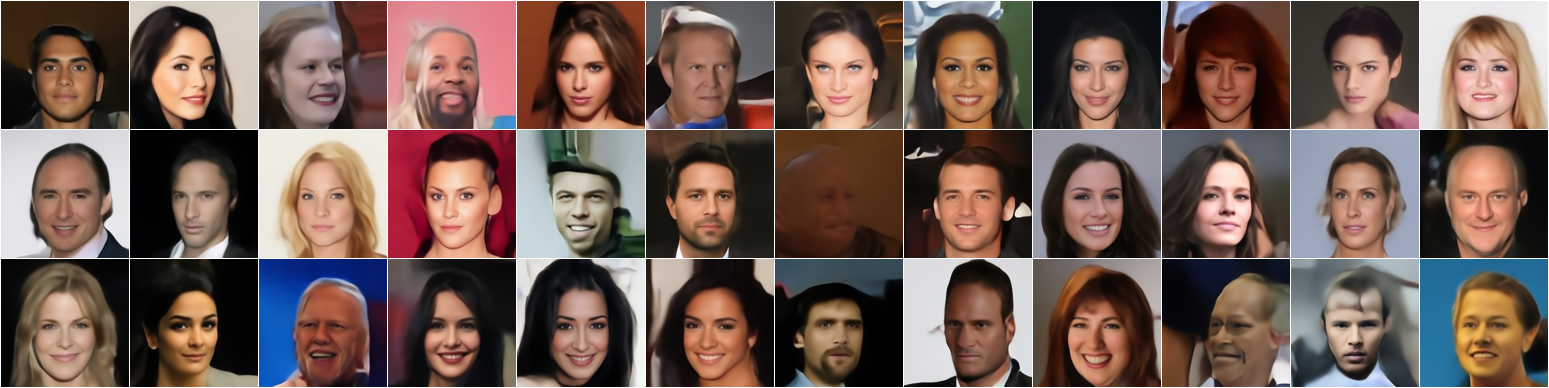} \\
    \noalign{\smallskip}\hdashline\noalign{\smallskip}
    
    \centering \rotatebox{90}{\small \quad DiT*} & 
    \includegraphics[width=0.92\columnwidth]{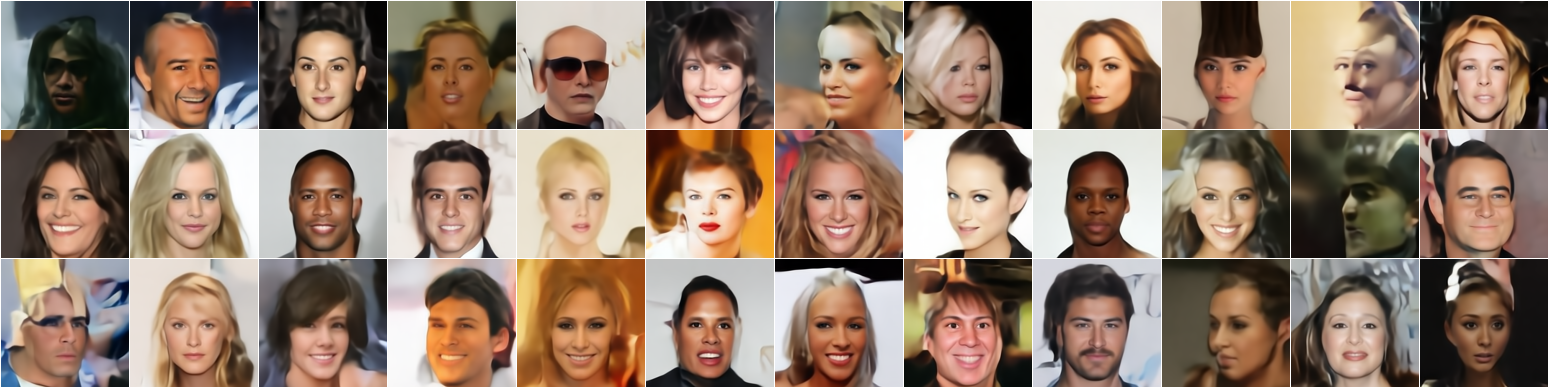} \\
    \noalign{\smallskip}\hdashline\noalign{\smallskip}
    
    \rotatebox{90}{\small \quad U-Net} & 
    \includegraphics[width=0.92\columnwidth]{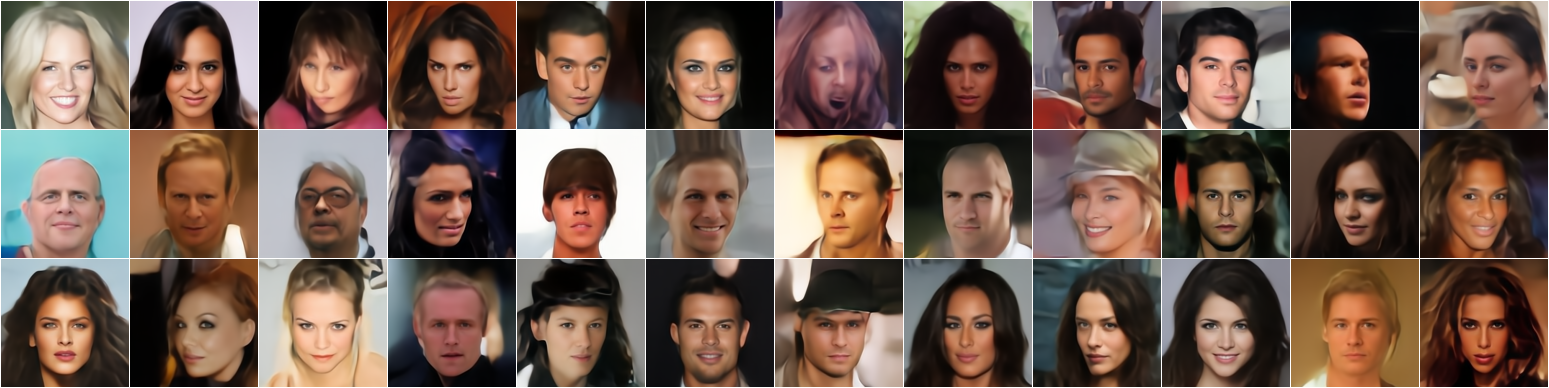} \\
    \noalign{\smallskip}\hdashline\noalign{\smallskip}
    
    \centering \rotatebox{90}{\small \quad U-Net*} & 
    \includegraphics[width=0.92\columnwidth]{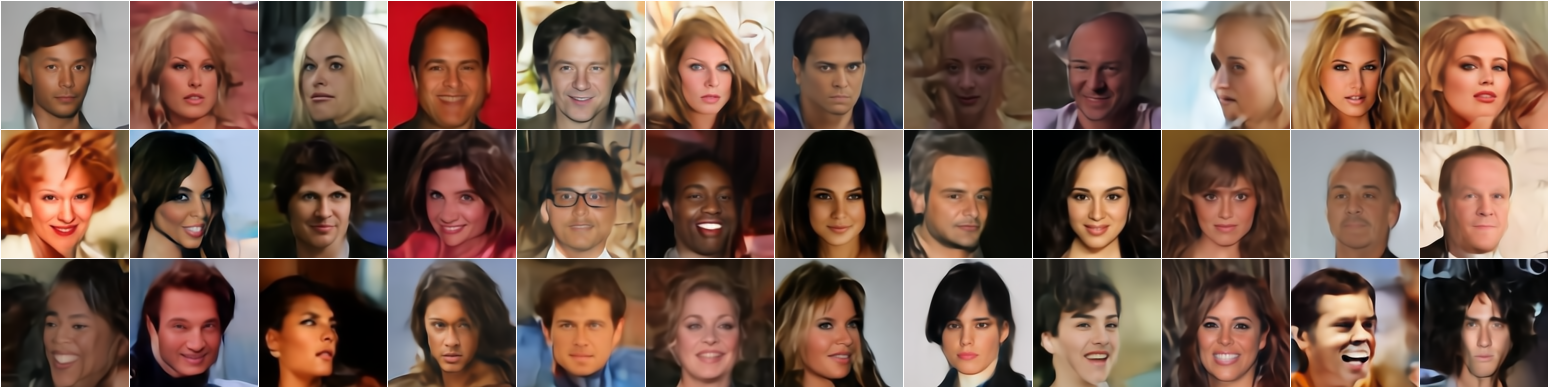} \\
\end{tabular}
\caption{\textbf{Unconditional face generation on the CelebA dataset ($128\times128$).} We compare the visual fidelity of standard architectures against our time-agnostic variants. The first and third rows show samples from standard DiT and U-Net backbones with explicit timestep conditioning. The second and fourth rows (marked with *) display samples from our time-agnostic counterparts.}
\label{fig:celeba_comparison}
\end{figure}

Regardless of the architectural backbone chosen---be it a U-Net or a DiT---current implementations fundamentally depend on a mechanism to inject the temporal signal or the noise scale. Although the importance of timestep embeddings is widely accepted, there has been limited investigation into their actual necessity. In this work, we challenge the conventional wisdom by exploring whether these temporal signals are truly indispensable for diffusion model architectures to effectively learn the denoising process. We conduct a rigorous ablation study on both U-Net and DiT architectures, systematically removing explicit timestep conditioning to evaluate the impact on sample quality and structural fidelity. Our empirical results reveal a surprising robustness in these time-agnostic models, which can maintain high performance even without explicit temporal signals.

Our contributions are as follows:
\begin{enumerate}
    \item We conduct a rigorous ablation study on the effect of timestep embeddings across both convolutional (U-Net) and transformer-based (DiT) diffusion architectures.
    \item We provide a comprehensive theoretical analysis that timestep signals can be functionally redundant under certain conditions, as the model can implicitly recover the noise scale directly from the corrupted input.
\end{enumerate}

\section{Background}

 Denoising diffusion probabilistic models (DDPM)~\cite{DDPM_2020} are latent variable models of the form $p_\theta(\mathbf{x}_0):=\int p_\theta(\mathbf{x}_{0:T})d\mathbf{x}_{1:T}$, where $\mathbf{x}_1,\dots,\mathbf{x}_{T}$ are 
 latent variables in the same sample space as $\mathbf{x}_{0}\sim q(\mathbf{x}_0)$. The \textit{forward process} is defined as a Markov chain that gradually adds Gaussian noise to the data:

\begin{align}
    \label{q_formula}
    q(\mathbf{x}_{1:T}\mid\mathbf{x}_0):=\prod^{T}_{t=1}q(\mathbf{x}_{t}\mid\mathbf{x}_{t-1}),
\end{align}

where the true posterior $q(\mathbf{x}_t\mid\mathbf{x}_0):=\mathcal{N}(\mathbf{x}_t;\sqrt{\bar{\alpha}_t}\mathbf{x}_0,(1-\bar{\alpha}_t)\mathbf{I})$ is parametrized by a time-dependent scalar $\bar{\alpha}_t\in\{\bar{\alpha}_1,\dots,\bar{\alpha}_T\}$. After a re-parameterization we can write:

\begin{align}
\label{x_t}
    \mathbf{x}_t:=\sqrt{\bar{\alpha}_t}\mathbf{x}_0+\sqrt{1-\bar{\alpha}_t}\boldsymbol{\epsilon}, 
\end{align}

where $\boldsymbol{\epsilon}\sim\mathcal{N}(\mathbf{0},\mathbf{I})$.The \textit{reverse process} is defined as a Markov chain:

 \begin{align}
     \label{p_formula}
     p_\theta(\mathbf{x}_{0:T}):=p_\theta(\mathbf{x}_T)\prod^{T}_{t=0}p_\theta(\mathbf{x}_{t-1}\mid\mathbf{x}_t).
 \end{align}

\subsection{The Training Process}

In the standard DDPM framework~\cite{DDPM_2020}, a neural network $\boldsymbol{\epsilon}_\theta$ processes two inputs: the corrupted image $\mathbf{x}_t$ and the discrete timestep $t\in\{0,1,\dots,T\}$. The objective function is to predict the ground-truth noise $\boldsymbol{\epsilon}$ added to the original image $\mathbf{x}_0$ in \cref{x_t} by minimizing the following loss function:

\begin{align}
    \label{l_theta}
    \mathcal{L}(\theta)=\mathbb{E}_{\mathbf{x}_0,\boldsymbol{\epsilon},t}[||\boldsymbol{\epsilon}-\boldsymbol{\epsilon}_\theta(\mathbf{x}_t,t)||^2].
\end{align}

Although minimizing $\mathcal{L}(\theta)$ is standard practice, the absolute dependence of $t$ as an explicit input remains an open question. 

We can obtain the global minimizer of $\mathcal{L}(\theta)$ with respect to $\boldsymbol{\epsilon}_\theta$ using the total expectation law:

\begin{align}
    \mathcal{L}(\theta)&=\mathbb{E}_{\mathbf{x}_t,t}\left[\mathbb{E}_{\boldsymbol{\epsilon}}\left[||\boldsymbol{\epsilon}-\boldsymbol{\epsilon}_\theta(\mathbf{x}_t,t)||^2 \mid \mathbf{x}_t,t\right]\right].
\end{align}

The global minimizer can be obtained by analyzing the inner expectation for each fixed $\mathbf{x}_t$ and $t$. If we define $\mathbf{y}:=\boldsymbol{\epsilon}_\theta(\mathbf{x}_t,t)$ and $\phi(\mathbf{x}_t,t):=\mathbb{E}_{\boldsymbol{\epsilon}}[||\boldsymbol{\epsilon}-\mathbf{y}||^2\mid \mathbf{x}_t,t]$, we can obtain the global minimizer $\mathbf{y}^*$ of $\phi(\mathbf{x}_t,t)$ by setting the gradient of $\phi(\mathbf{x}_t,t)$ with respect to $\mathbf{y}$ to zero:

\begin{align}
    \label{L_global_minimizer}
    \mathbf{0}&=-2\mathbb{E}_{\boldsymbol{\epsilon}}[\boldsymbol{\epsilon}\mid \mathbf{x}_t,t] + 2\mathbf{y}^*  \nonumber \\
    \therefore \boldsymbol{\epsilon}^{*}_\theta(\mathbf{x}_t,t)&=\mathbb{E}[\boldsymbol{\epsilon}\mid \mathbf{x}_t,t].
\end{align}

\subsection{The Sampling Process}
During the sampling process, given the noise predicted by $\boldsymbol{\epsilon}_\theta$, DDPM uses it to predict the uncorrupted image $\mathbf{x}_{0}$:

\begin{align}
    \label{x_0}
    \mathbf{x}_{0}(\theta)=\frac{1}{\sqrt{\bar{\alpha}_t}}\left(\mathbf{x}_t-\sqrt{1-\bar{\alpha}_t}\boldsymbol{\epsilon}_\theta(\mathbf{x}_t,t)\right).
\end{align}

As stated in DDPM, $\mathbf{x}_{t-1}$ can be predicted using $\mathbf{x}_{0}$ and $\mathbf{x}_{t}$:

\begin{align}
    \label{x_t_minus_1_extended}
    \mathbf{x}_{t-1}=\frac{\sqrt{\bar{\alpha}_{t-1}}(1-\alpha_t)}{1-\bar{\alpha}_t}\mathbf{x_0}(\theta)+\frac{\sqrt{\alpha_t}(1-\bar{\alpha}_{t-1})}{1-\bar{\alpha}_t}\mathbf{x}_t+\sqrt{1-\alpha_t}\mathbf{r},
\end{align}

 Similarly, DDIM~\cite{song2020denoising} computes first $\mathbf{x}_0$ to estimate $\mathbf{x}_{t-1}$.



\section{The Redundancy of Timesteps in the Loss Function}

Given that samples $\mathbf{x}_0$ from the train data distribution $q(\mathbf{x}_0)$ and \cref{x_t} are fulfilled, we can express \cref{l_theta} as:

\begin{align}
    \label{l_theta_extended}
    \mathcal{L}(\theta)=\mathbb{E}_{\mathbf{x}_0,\boldsymbol{\epsilon},t}||\boldsymbol{\epsilon}-\boldsymbol{\epsilon}_\theta(\sqrt{\bar{\alpha}_t}\mathbf{x}_0+\sqrt{1-\bar{\alpha}_t}\boldsymbol{\epsilon},t)||^2.
\end{align}

Then, using Tweedie's Formula, we can predict the true posterior mean of $\mathbf{x}_t$ at a specific timestep $t$:

\begin{align}
    \label{tweedie}
    \mathbb{E}[\sqrt{\bar{\alpha}_t}\mathbf{x}_0\mid \mathbf{x}_t,t]&=\mathbf{x}_t+(1-\bar{\alpha}_t)\nabla_{\mathbf{x}_t}\log p(\mathbf{x}_t)
\end{align}

Here,  $\nabla_{\mathbf{x}_t}\log p(\mathbf{x}_t)$ represents the \textit{score function}~\cite{score_2005}\footnote{We use $p(\mathbf{x}_t)$ to denote the marginal density at a specific timestep $t$, relying on the implicit temporal index of the random variable $\mathbf{x}_t$.}. Following the derivation presented in~\cite{luo2022understanding}, we can use \cref{x_t} to formalize the connection between the ground-truth noise and the score function:

\begin{align}
    \label{score_noise}
    \mathbf{x}_t-\sqrt{1-\bar{\alpha}_t}\boldsymbol{\epsilon}&=\mathbf{x}_t+(1-\bar{\alpha}_t)\nabla_{\mathbf{x}_t}\log p(\mathbf{x}_t) \\
    \therefore \boldsymbol{\epsilon}&=-\sqrt{1-\bar{\alpha}_t}\nabla_{\mathbf{x}_t}\log p(\mathbf{x}_t).
\end{align}

Thus, the loss function in DDPM~\cite{DDPM_2020} is equivalent to the loss function used in noise conditional score networks~\cite{NEURIPS2019_yang}. Intuitively, the score function gives a measure of how to move $\mathbf{x}_t$ in the data space to maximize its log-probability. Thus, \cref{l_theta_extended} tries to predict the opposite direction of the score function scaled by $\sqrt{1-\bar{\alpha}_t}$. In other words, the network $\boldsymbol{\epsilon}_\theta$ is trained to predict the noise that drives $\mathbf{x}_t$ away from the data manifold. Crucially, while this standard formulation relies on the explicit timestep $t$ to determine the scale of the score function, we argue that the spatial structure of the corrupted state $\mathbf{x}_t$ inherently encodes its own noise scale $\bar{\alpha}_t$. If this temporal conditioning is indeed redundant, we can drop $t$ from the loss function entirely.

To formally connect the time-agnostic objective with the standard DDPM framework, we first establish that under the assumption of local identifiability, the temporal information becomes mathematically redundant.

\begin{lemma}
\label{lemma:temporal_redundancy}
Let $\mathbf{x}_t=\sqrt{\bar{\alpha}_t}\mathbf{x}_0+\sqrt{1-\bar{\alpha}_t}\boldsymbol{\epsilon}$ be a corrupted sample with an implicit noise scale $\bar{\alpha}_t\in(0,1)$, where $\mathbf{x}_0\sim q(\mathbf{x}_0)$ and $\boldsymbol{\epsilon}\sim\mathcal{N}(\mathbf{0},\mathbf{I})$. If there exists a measurable map $\mu:\mathbb{R}^d\to(0,1)$ such that $\mu(\mathbf{x}_t)=\bar{\alpha}_t$, then the time-agnostic conditional expectation of the noise equals the time-dependent one:

$$\mathbb{E}[\boldsymbol{\epsilon}\mid \mathbf{x}_t, t] =  \mathbb{E}[\boldsymbol{\epsilon}\mid \mathbf{x}_t] $$

\end{lemma}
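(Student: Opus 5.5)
Since $\mu$ recovers $\bar{\alpha}_t$ from $\mathbf{x}_t$, the information carried by $t$ is already contained in $\mathbf{x}_t$; the proof just makes this precise with $\sigma$-algebras and the tower property. Let $\mathcal{G}:=\sigma(\mathbf{x}_t)$ and $\mathcal{H}:=\sigma(\mathbf{x}_t,t)$, with $t$ random on $\{1,\dots,T\}$, so $\mathcal{G}\subseteq\mathcal{H}$. The identity to prove is $\mathbb{E}[\boldsymbol{\epsilon}\mid\mathcal{H}]=\mathbb{E}[\boldsymbol{\epsilon}\mid\mathcal{G}]$ almost surely. The strategy is to show $\mathcal{H}=\mathcal{G}$ up to null sets.

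\textbf{Step 1: $t$ is a function of $\mathbf{x}_t$.} By hypothesis $\bar{\alpha}_t=\mu(\mathbf{x}_t)$ with $\mu$ measurable, so $\bar{\alpha}_t$ is $\mathcal{G}$-measurable. The schedule $t\mapsto\bar{\alpha}_t$ is strictly decreasing in standard DDPM, hence injective on $\{1,\dots,T\}$. I will state this as an explicit (mild) assumption. Let $\tau:\{\bar{\alpha}_1,\dots,\bar{\alpha}_T\}\to\{1,\dots,T\}$ be its inverse, which is trivially measurable on a finite set. Then $t=\tau(\mu(\mathbf{x}_t))$ almost surely, so $t$ is $\mathcal{G}$-measurable. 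It follows that $\mathcal{H}=\sigma(\mathbf{x}_t,t)\subseteq\mathcal{G}$, and with the reverse inclusion, $\mathcal{H}=\mathcal{G}$ modulo null sets.

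\textbf{Step 2: conclude.} Since $\boldsymbol{\epsilon}$ is Gaussian, it is integrable, so both conditional expectations exist. Equal $\sigma$-algebras give the same conditional expectation almost surely. Alternatively, avoid $\sigma$-algebra equality and argue directly. The tower property gives $\mathbb{E}[\boldsymbol{\epsilon}\mid\mathbf{x}_t]=\mathbb{E}\bigl[\mathbb{E}[\boldsymbol{\epsilon}\mid\mathbf{x}_t,t]\mid\mathbf{x}_t\bigr]$. The inner term is $g(\mathbf{x}_t,t)=g(\mathbf{x}_t,\tau(\mu(\mathbf{x}_t)))$, which is $\mathcal{G}$-measurable and therefore passes through the outer conditioning unchanged. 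As a sanity check, this also justifies replacing $\boldsymbol{\epsilon}^*_\theta(\mathbf{x}_t,t)$ in \cref{L_global_minimizer} by a function of $\mathbf{x}_t$ alone.

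\textbf{Main obstacle.} The delicate point is interpretation rather than computation. The hypothesis ``$\mu(\mathbf{x}_t)=\bar{\alpha}_t$'' must hold almost surely jointly over $(\mathbf{x}_0,\boldsymbol{\epsilon},t)$, not separately for each fixed $t$. If $t$ were treated as deterministic, conditioning on it would be vacuous and the lemma would be trivial. The hypothesis is also strong: for Gaussian noise it roughly requires the supports of the marginals at different $t$ to be essentially disjoint, which is only approximately true, for instance in high dimension via norm concentration. I will state it as given and not try to verify it.

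The second subtlety is injectivity of $t\mapsto\bar{\alpha}_t$. If injectivity fails, I would instead note that the law of $(\mathbf{x}_0,\boldsymbol{\epsilon})$ does not depend on $t$ and $\mathbf{x}_t$ depends on $t$ only through $\bar{\alpha}_t$. It then suffices to condition on $(\mathbf{x}_t,\bar{\alpha}_t)$. For this I would check that $\mathbb{E}[\boldsymbol{\epsilon}\mid\mathbf{x}_t,t]=\mathbb{E}[\boldsymbol{\epsilon}\mid\mathbf{x}_t,\bar{\alpha}_t]$ via the independence of $t$ from $(\mathbf{x}_0,\boldsymbol{\epsilon})$, and then apply Step 1 with $\bar{\alpha}_t$ in place of $t$.
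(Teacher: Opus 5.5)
Your proof is correct and is essentially the paper's argument. The paper writes $\mathbb{E}[\boldsymbol{\epsilon}\mid\mathbf{x}_t]$ as $\mathbb{E}[\boldsymbol{\epsilon}\mid\mathbf{x}_t,\bar{\alpha}_t]$ integrated against a Dirac delta at $\mu(\mathbf{x}_t)$, which is the tower property with $\bar{\alpha}_t$ a function of $\mathbf{x}_t$, and then passes to $t$ via the bijection $t\leftrightarrow\bar{\alpha}_t$, exactly as in your Step 2; your $\sigma$-algebra formulation is the rigorous version of this and usefully makes explicit two things the paper leaves implicit, namely that the schedule is injective and that $\mu(\mathbf{x}_t)=\bar{\alpha}_t$ must hold almost surely jointly over $(\mathbf{x}_0,\boldsymbol{\epsilon},t)$.
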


\begin{proof}
Since $\mu(\mathbf{x}_t)=\bar{\alpha}_t$, we can define the conditional distribution $p(\bar{\alpha}_t\mid\mathbf{x}_t)$ as a Dirac delta distribution centered at $\mu(\mathbf{x}_t)$ because $\bar{\alpha}_t$ is deterministic given $\mathbf{x}_t$. Consequently, the conditional expectation can be written as:

\begin{align*}
    \mathbb{E}[\boldsymbol{\epsilon}\mid\mathbf{x}_t]&=\int \mathbb{E}[\boldsymbol{\epsilon}\mid\mathbf{x}_t,\bar{\alpha}_t]\delta(\bar{\alpha}_t-\mu(\mathbf{x}_t))d\bar{\alpha}_t \\
    &=\mathbb{E}[\boldsymbol{\epsilon}\mid\mathbf{x}_t,\bar{\alpha}_t]\bigg|_{\bar{\alpha}_t=\mu(\mathbf{x}_t)}
\end{align*}

Thus $\mathbb{E}[\boldsymbol{\epsilon}\mid\mathbf{x}_t]=\mathbb{E}[\boldsymbol{\epsilon}\mid\mathbf{x}_t,t]$ as there exists a bijection between $\bar{\alpha}_t$ and the timestep $t$. \qed

\end{proof}

\begin{theorem}
    \label{theorem:main}
Let $\mathbf{x}_t=\sqrt{\bar{\alpha}_t}\mathbf{x}_0+\sqrt{1-\bar{\alpha}_t}\boldsymbol{\epsilon}$ be a corrupted sample with an implicit noise scale $\bar{\alpha}_t\in(0,1)$, where $\mathbf{x}_0\sim q(\mathbf{x}_0)$ and $\boldsymbol{\epsilon}\sim\mathcal{N}(\mathbf{0},\mathbf{I})$. Let $\boldsymbol{\epsilon}_\theta$ be a neural network trained to minimize the time-agnostic loss function 

\begin{align*}
    \mathcal{L}_{TA}(\theta)=\mathbb{E}_{\mathbf{x}_0,\boldsymbol{\epsilon},t}[||\boldsymbol{\epsilon}-\boldsymbol{\epsilon}_\theta(\sqrt{\bar{\alpha}_t}\mathbf{x}_0+\sqrt{1-\bar{\alpha}_t}\boldsymbol{\epsilon})||^2]
\end{align*}

Assume there exists a measurable mapping $\mu:\mathbb{R}^d\to(0,1)$ such that $\mu(\mathbf{x}_t)=\bar{\alpha}_t$. Then:

\begin{enumerate}
    \item The global minimizer $\boldsymbol{\epsilon}^{*}_\theta$ of $\mathcal{L}_{TA}(\theta)$ satisfies $\boldsymbol{\epsilon}^{*}_\theta=\mathbb{E}[\boldsymbol{\epsilon}\mid\mathbf{x}_t]$.
    \item The global minimizer can be determined by the implicit noise scale $\bar{\alpha}_t$ and the score function $\nabla_{\mathbf{x}_t}\log p(\mathbf{x}_t)$.
\end{enumerate}

\end{theorem}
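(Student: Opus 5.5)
Part 1 follows the same route as the derivation of \cref{L_global_minimizer}, but with the network input reduced to $\mathbf{x}_t$ alone. By the tower property, I rewrite
\begin{align*}
\mathcal{L}_{TA}(\theta)=\mathbb{E}_{\mathbf{x}_t}\left[\mathbb{E}\left[\|\boldsymbol{\epsilon}-\boldsymbol{\epsilon}_\theta(\mathbf{x}_t)\|^2\mid\mathbf{x}_t\right]\right].
\end{align*}
Here $\mathbf{x}_t$ denotes the random variable obtained by first drawing $t$ and then forming the mixture over $t$. Since $\boldsymbol{\epsilon}_\theta(\mathbf{x}_t)$ is $\mathbf{x}_t$-measurable, I minimize the inner quadratic pointwise in $\mathbf{y}=\boldsymbol{\epsilon}_\theta(\mathbf{x}_t)$. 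Setting the gradient $-2\mathbb{E}[\boldsymbol{\epsilon}\mid\mathbf{x}_t]+2\mathbf{y}$ to zero, and noting that the Hessian $2\mathbf{I}$ gives strict convexity, yields $\boldsymbol{\epsilon}^*_\theta(\mathbf{x}_t)=\mathbb{E}[\boldsymbol{\epsilon}\mid\mathbf{x}_t]$. This holds almost everywhere under the law of $\mathbf{x}_t$, and only requires $\boldsymbol{\epsilon}$ to have a finite second moment. Equivalently, I can use the standard $L^2$ projection characterization of conditional expectation: every measurable $g$ satisfies $\mathbb{E}\|\boldsymbol{\epsilon}-g(\mathbf{x}_t)\|^2=\mathbb{E}\|\boldsymbol{\epsilon}-\mathbb{E}[\boldsymbol{\epsilon}\mid\mathbf{x}_t]\|^2+\mathbb{E}\|\mathbb{E}[\boldsymbol{\epsilon}\mid\mathbf{x}_t]-g(\mathbf{x}_t)\|^2$. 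I also assume the network class is rich enough to realize this minimizer, which is the usual idealization.

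For Part 2, I combine Part 1 with Lemma~\ref{lemma:temporal_redundancy}. The lemma, applied under the hypothesis $\mu(\mathbf{x}_t)=\bar{\alpha}_t$, gives $\mathbb{E}[\boldsymbol{\epsilon}\mid\mathbf{x}_t]=\mathbb{E}[\boldsymbol{\epsilon}\mid\mathbf{x}_t,t]$. I then express the right-hand side through Tweedie's formula \cref{tweedie}. Taking conditional expectations of \cref{x_t} gives
\begin{align*}
\sqrt{1-\bar{\alpha}_t}\,\mathbb{E}[\boldsymbol{\epsilon}\mid\mathbf{x}_t,t]=\mathbf{x}_t-\mathbb{E}[\sqrt{\bar{\alpha}_t}\mathbf{x}_0\mid\mathbf{x}_t,t]=-(1-\bar{\alpha}_t)\nabla_{\mathbf{x}_t}\log p(\mathbf{x}_t),
\end{align*}
so $\mathbb{E}[\boldsymbol{\epsilon}\mid\mathbf{x}_t,t]=-\sqrt{1-\bar{\alpha}_t}\,\nabla_{\mathbf{x}_t}\log p(\mathbf{x}_t)$. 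This is the rigorous, conditional-expectation form of \cref{score_noise}. Substituting $\bar{\alpha}_t=\mu(\mathbf{x}_t)$ gives
\begin{align*}
\boldsymbol{\epsilon}^*_\theta(\mathbf{x}_t)=-\sqrt{1-\mu(\mathbf{x}_t)}\,\nabla_{\mathbf{x}_t}\log p(\mathbf{x}_t).
\end{align*}
The minimizer is therefore a function only of the implicit noise scale and the score, with no explicit $t$.

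The main obstacle is interpreting $p(\mathbf{x}_t)$ correctly. Without $t$, the natural density of $\mathbf{x}_t$ is the mixture $\sum_t p(t)\,p_t(\mathbf{x}_t)$, whereas Tweedie's formula involves the per-timestep marginal $p_t$, as noted in the footnote. I would handle this with the identifiability assumption. Because $\mu$ is measurable with $\mu(\mathbf{x}_t)=\bar{\alpha}_t$, the events $A_s=\{\mu=\bar{\alpha}_s\}$ partition the space, and the component $p_s$ is supported (a.e.) on $A_s$. On $A_t$ the mixture therefore equals $p(t)\,p_t$, so their log-gradients agree wherever the score is defined. This justifies reading $\nabla\log p(\mathbf{x}_t)$ as either density almost everywhere.

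A caveat worth stating: with Gaussian smoothing every $p_t$ has full support, so exact identifiability can only hold in an idealized or high-dimensional concentration sense. The theorem is conditional on that assumption, and the proof uses it only through the lemma and this partition argument.
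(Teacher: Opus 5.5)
Your proposal is correct and follows the paper's proof essentially step for step. Part~1 uses the tower property over $\mathbf{x}_t$ with pointwise minimization, and Part~2 uses Lemma~\ref{lemma:temporal_redundancy} to reintroduce $t$, then Tweedie's formula applied to $\mathbb{E}[\mathbf{x}_0\mid\mathbf{x}_t,t]$. Your additions go beyond the paper: the $L^2$-projection justification, substituting $\mu(\mathbf{x}_t)$ for $\bar{\alpha}_t$, and the mixture-versus-$p_t$ discussion with its caveat that exact identifiability is only an idealization under full-support Gaussian smoothing; the paper instead simply reads $p(\mathbf{x}_t)$ as the per-timestep marginal $p_t$ via its footnote.
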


\begin{proof}

By the law of total expectation, we can express $\mathcal{L}_{TA}(\theta)$ by conditioning on the observable state $\mathbf{x}_t$:

\begin{align*}
    \mathcal{L}_{TA}(\theta)&=\mathbb{E}_{\mathbf{x}_t}\left[\mathbb{E}\left[||\boldsymbol{\epsilon}-\boldsymbol{\epsilon}_\theta(\mathbf{x}_t)||^2\mid \mathbf{x}_t\right]\right].
\end{align*}

Similar to the derivation of the global minimizer in \cref{L_global_minimizer}, setting the gradient of the inner expectation to zero yields:

\begin{align*}
    \boldsymbol{\epsilon}^{*}_\theta(\mathbf{x}_t)=\mathbb{E}[\boldsymbol{\epsilon}\mid \mathbf{x}_t].
\end{align*}

To express this minimizer in terms of the score function, we must evaluate the noise expectation. Under the assumed existence of the measurable map $\mu(\mathbf{x}_t)=\bar{\alpha}_t$, we invoke \cref{lemma:temporal_redundancy} to establish that $\mathbb{E}[\boldsymbol{\epsilon}\mid\mathbf{x}_t] = \mathbb{E}[\boldsymbol{\epsilon}\mid\mathbf{x}_t, t]$. This substitution allows us to safely condition on the explicit timestep:

\begin{align*}
    \boldsymbol{\epsilon}^{*}_\theta(\mathbf{x}_t)&=\mathbb{E}[\boldsymbol{\epsilon}\mid\mathbf{x}_t, t] =\frac{\mathbf{x}_t-\sqrt{\bar{\alpha}_t}\mathbb{E}[\mathbf{x}_0\mid\mathbf{x}_t, t]}{\sqrt{1-\bar{\alpha}_t}}.
\end{align*}

Because the expectation $\mathbb{E}[\mathbf{x}_0\mid\mathbf{x}_t, t]$ is now explicitly conditioned on $t$, we can validly apply Tweedie's Formula (\cref{tweedie}) over the true posterior, which yields:

\begin{align*}
    \boldsymbol{\epsilon}^{*}_\theta(\mathbf{x}_t)&=-\sqrt{1-\bar{\alpha}_t}\nabla_{\mathbf{x}_t}\log p_t(\mathbf{x}_t).
\end{align*}
\qed

\end{proof}

\cref{theorem:main} establishes a crucial theoretical equivalence: if the state $\mathbf{x}_t$ inherently encodes its noise scale, minimizing a time-agnostic objective ($\mathcal{L}_{TA}$) successfully recovers the exact same time-dependent score direction as standard DDPM training. In other words, the explicit time signal $t$ can be completely disregarded by the network architecture without sacrificing objective optimality.

However, this theoretical equivalence relies entirely on the premise that the corrupted state can act as its own temporal index. The fundamental question remains: does the measurable mapping $\mu(\mathbf{x}_t)=\bar{\alpha}_t$ actually exist in the data distributions processed by diffusion models? \cref{theorem:mu} addresses this by proving the asymptotic existence of this mapping in high-dimensional spaces.

\begin{theorem}
    \label{theorem:mu}
    Let $\mathbf{x}_t=\sqrt{\bar{\alpha}_t}\mathbf{x}_0+\sqrt{1-\bar{\alpha}_t}\boldsymbol{\epsilon}$ be a corrupted sample on an unknown scale $\bar{\alpha}_t\in(0,1)$, where
    
    \begin{enumerate}[i]
        \item $\mathbf{x}_0\sim q(\mathbf{x}_0)$ and $\boldsymbol{\epsilon}\sim\mathcal{N}(\mathbf{0},\mathbf{I})$ are independent, and $\mathbf{x}_t$ is observable.
        \item $\mathbf{x}_0,\boldsymbol{\epsilon}\in\mathbb{R}^d$. 
        \item $\mathbb{E}[\frac{||\mathbf{x}_0||^2}{d}]$ is bounded.
    \end{enumerate}
 
    If $\frac{||\mathbf{x}_0||^2}{d}\xrightarrow{P} C$ for some positive constant $C\neq 1$. Then, there exists a sequence of measurable mappings $\{\mu_d:\mathbb{R}^d\to(0,1)\}_{d=1}^\infty$ such that $\mu_d(\mathbf{x}_t)\xrightarrow{P} \bar{\alpha}_t$ as $d\to\infty$.
\end{theorem}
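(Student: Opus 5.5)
The natural estimator is the normalized squared norm of the observation. Set $S_d(\mathbf{x}_t):=\|\mathbf{x}_t\|^2/d$ and expand
\begin{align*}
S_d=\bar{\alpha}_t\frac{\|\mathbf{x}_0\|^2}{d}+(1-\bar{\alpha}_t)\frac{\|\boldsymbol{\epsilon}\|^2}{d}+2\sqrt{\bar{\alpha}_t(1-\bar{\alpha}_t)}\frac{\langle\mathbf{x}_0,\boldsymbol{\epsilon}\rangle}{d}.
\end{align*}
We will show that $S_d\xrightarrow{P}\bar{\alpha}_t C+(1-\bar{\alpha}_t)=1+\bar{\alpha}_t(C-1)$. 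This limit is an affine function of $\bar{\alpha}_t$ with nonzero slope because $C\neq1$, so we can invert it. Define $\tilde\mu_d(\mathbf{x})=(\|\mathbf{x}\|^2/d-1)/(C-1)$ and $\mu_d:=\pi\circ\tilde\mu_d$, where $\pi$ is a continuous map into $(0,1)$. For example, take $\pi(s)=\min(\max(s,\eta_d),1-\eta_d)$ with $\eta_d\downarrow0$, or simply clip to $[\delta,1-\delta]$ with $\delta$ depending on $d$. Since $\pi$ is continuous, $\mu_d$ is measurable. Note that $\mu_d$ uses the constant $C$ but not $\bar{\alpha}_t$, which is what the statement permits.

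The three terms are handled in order. First, $\|\mathbf{x}_0\|^2/d\xrightarrow{P}C$ holds by hypothesis. Second, $\|\boldsymbol{\epsilon}\|^2/d\xrightarrow{P}1$ by the weak law of large numbers: it is a mean of $d$ i.i.d.\ $\chi^2_1$ variables, and Chebyshev gives variance $2/d$. Third, the cross term needs the independence in (i). Conditionally on $\mathbf{x}_0$, $\langle\mathbf{x}_0,\boldsymbol{\epsilon}\rangle\sim\mathcal{N}(0,\|\mathbf{x}_0\|^2)$, so
$$\mathbb{E}\big[(\langle\mathbf{x}_0,\boldsymbol{\epsilon}\rangle/d)^2\big]=\mathbb{E}[\|\mathbf{x}_0\|^2/d]/d\le K/d\to0.$$
Here assumption (iii) supplies the uniform bound $K$, and Chebyshev then gives convergence to $0$ in probability. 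Slutsky's theorem combines the three limits, and the continuous mapping theorem yields $\tilde\mu_d(\mathbf{x}_t)\xrightarrow{P}\bar{\alpha}_t$.

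The step I expect to need the most care is the clipping into $(0,1)$ and the status of $\bar{\alpha}_t$. If $\bar{\alpha}_t$ is a fixed number in $(0,1)$, choose a fixed $\delta<\min(\bar{\alpha}_t,1-\bar{\alpha}_t)$; clipping then does not change the limit, but $\delta$ would depend on $\bar{\alpha}_t$. Using $\eta_d\to0$ avoids that dependence, because $|\pi_d(s)-s|\le\eta_d$ on $[0,1]$ and $\pi_d$ is 1-Lipschitz. The same argument works if $t$ is random with finitely many values, by conditioning on each value of $t$.
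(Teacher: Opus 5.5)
Your proposal is correct and follows the paper's proof essentially step for step. You use the same statistic $\|\mathbf{x}_t\|^2/d$ and the same three-term expansion, with $\chi^2$ concentration for the noise term and a Chebyshev bound on the cross term via $\mathbb{E}[(\mathbf{x}_0^T\boldsymbol{\epsilon})^2]=\mathbb{E}[\|\mathbf{x}_0\|^2]$. You then invert the affine map $h(\bar{\alpha})=1+\bar{\alpha}(C-1)$, as the paper does.

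Your clipping step is a genuine improvement. The paper simply sets $\mu_d=h^{-1}(H_d)$, which need not take values in $(0,1)$, and is not even in the domain of $h^{-1}$ whenever $H_d$ falls outside the interval between $1$ and $C$. Your $\eta_d$-clip, combined with the $1$-Lipschitz bound, repairs this without changing the limit.
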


\begin{proof}
    We consider the statistic $H_d(\mathbf{x}_t):=\frac{||\mathbf{x}_t||^2}{d}$ and express it in terms of $\mathbf{x}_0$ and $\boldsymbol{\epsilon}$:

    \begin{align*}
        H_d(\mathbf{x}_t)&=\bar{\alpha}_t\frac{||\mathbf{x}_0||^2}{d}+(1-\bar{\alpha}_t)\frac{||\boldsymbol{\epsilon}||^2}{d}+2\sqrt{\bar{\alpha}_t(1-\bar{\alpha}_t)}\frac{\mathbf{x}^T_0\boldsymbol{\epsilon}}{d}
    \end{align*}

    Since $||\boldsymbol{\epsilon}||^2\sim\chi^2(d)$, we have $\frac{||\boldsymbol{\epsilon}||^2}{d}\xrightarrow{P} 1$ as $d\to\infty$. Moreover, $\mathbf{x}_0$ and $\boldsymbol{\epsilon}$ are independent, thus $\mathbb{E}[\mathbf{x}^T_0\boldsymbol{\epsilon}]=0$, and we can express $\mathbb{E}[(\mathbf{x}^T_0\boldsymbol{\epsilon})^2\mid\mathbf{x}_0]$ as follows:

    \begin{align*}
        \mathbb{E}[(\mathbf{x}^T_0\boldsymbol{\epsilon})^2] &= \mathbb{E}[\mathbb{E}[(\mathbf{x}^T_0\boldsymbol{\epsilon})^2\mid\mathbf{x}_0]] = \mathbb{E}[||\mathbf{x}_0||^2]
    \end{align*}
    
    Now, given a $\delta>0$, by Cheybyshev's inequality:

    \begin{align*}
            P\left(\left|\frac{\mathbf{x}^T_0\boldsymbol{\epsilon}}{d}\right|>\delta\right)\leq \frac{1}{d}\mathbb{E}\left[\frac{||\mathbf{x}_0||^2}{d}\right]/\delta^2 
    \end{align*}
    
    Since $\mathbb{E}[\frac{||\mathbf{x}_0||^2}{d}]$ is bounded, we have $P\left(\left|\frac{\mathbf{x}^T_0\boldsymbol{\epsilon}}{d}\right|>\delta\right)\to 0$ as $d\to\infty$. Finally, we have the following convergence in probability:

    \begin{align*}
        H_d(\mathbf{x}_t)&\xrightarrow{P} \bar{\alpha}_t (C-1) + 1
    \end{align*}
    
    Since $C$ is a positive constant with $C\neq 1$, $H$ converges to a deterministic function of $\bar{\alpha}_t$. Define the affine function $h:(0,1)\to\mathbb{R}$ as follows:
    
    \begin{align*}
        h(\bar{\alpha}_t):=\bar{\alpha}_t (C-1) + 1.
    \end{align*}
    
    Observe that $h$ is a strictly monotone function and therefore invertible. Consequently, for any $\delta>0$, we can express the probability of the event
    
    \begin{align*}
        |h^{-1}(H_d(\mathbf{x}_t))-h^{-1}(h(\bar{\alpha}_t))|>\delta
    \end{align*}
    
    as follows:

    \begin{align*}
        P(|h^{-1}(H_d(\mathbf{x}_t))-h^{-1}(h(\bar{\alpha}_t))|>\delta) &=  P(|H_d(\mathbf{x}_t)-h(\bar{\alpha}_t)|>\delta|{C-1}|).
    \end{align*}
    
    Therefore, by assumptions of the theorem, we have:

    \begin{align*}
            P(|h^{-1}(H_d(\mathbf{x}_t))-h^{-1}(h(\bar{\alpha}_t))|>\delta)\to 0 \text{ as } d\to\infty.
    \end{align*}
    
    Since $h^{-1}(h(\bar{\alpha}_t))=\bar{\alpha}_t$, we have the following convergence in probability:
    
    \begin{align*}
        h^{-1}(H_d(\mathbf{x}_t))\xrightarrow{P} h^{-1}(h(\bar{\alpha}_t))=\bar{\alpha}_t.
    \end{align*}
    
    If we define $\mu_d(\mathbf{x}_t):=h^{-1}(H_d(\mathbf{x}_t))$, we obtain $\mu_d(\mathbf{x}_t)\xrightarrow{P} \bar{\alpha}_t$ as $d\to\infty$
    \qed
\end{proof}

In practical settings, image data are digitally represented with bounded intensity values (e.g., normalized to $[-1,1]$ or $[0,1]$). Consequently, the components of $\mathbf{x}_0$ are bounded, which immediately implies the existence of uniformly bounded fourth moments. In particular, there exists a constant $M>0$ such that $\mathbb{E}\left[\mathbf{x}^4_{0,i}\right] \leq M \text{ for all } i\in\{1,\dots,d\}$. Moreover, since $||\mathbf{x}_0||^2\leq m^2d$ for some finite constant $m>0$, it follows that $\mathbb{E}\left[\frac{||\mathbf{x}_0||^2}{d}\right]$ is also bounded in standard image datasets.

While natural images exhibit significant local correlations, the dependencies between individual pixels tend to decay rapidly with spatial distance. Thus, under the assumption of a weak dependence condition such that:

\begin{align*}
    \sum_{i<j} |\text{Cov}(\mathbf{x}_{0,i}^2, \mathbf{x}_{0,j}^2)| = O(d),
\end{align*}

Then,

\begin{align*}
    \text{Var}\left(\sum^d_{i=1} \mathbf{x}^2_{0,i}\right) &\leq dM + O(d) = O(d).
\end{align*}

Let $C:=\mathbb{E}[\frac{||\mathbf{x}_0||^2}{d}]=\frac{1}{d}\sum^d_{i=1}\mathbb{E}[\mathbf{x}^2_{0,i}]$ denote the average of the expected second moments of the individual data components of $\mathbf{x}_0$. By Chebyshev's inequality, we establish the convergence in probability of the empirical second moment to $C$. For any $\delta>0$:

\begin{align*}
    P\left(\left|\frac{||\mathbf{x}_0||^2}{d}-C\right|>\delta\right) &\leq \frac{1}{d^2\delta^2}O(d) \\
    \therefore P\left(\left|\frac{||\mathbf{x}_0||^2}{d}-C\right|>\delta\right) &\to 0 \text{ as } d\to\infty.
\end{align*}

A critical edge case arises if the data distribution is globally normalized such that its individual components have zero mean and unit variance. In this global regime, the expectation yields $C=1$, meaning $H_d(\mathbf{x}_t)$ converges to a constant equal to 1, independent of $\bar{\alpha}_t$. Consequently, observing the global norm of the entire image would fail to guarantee the redundancy of the time signal. 

However, in practical DDPM architectures, the neural network does not process the input as a single monolithic vector. Instead, models evaluate the input locally. Therefore, we should interpret $\mathbf{x}_0$ in \cref{theorem:mu} not as the full global image, but as a local spatial sub-region, such as a receptive field or an image patch of size $p \times p$. 

While the global variance of a dataset might be normalized to 1, natural images are highly heterogeneous. A local receptive field capturing a flat region (e.g., clear sky) will exhibit a local second moment $C_{local} \ll 1$, whereas a highly textured region yields $C_{local} \gg 1$. Because modern architectures compute features locally (e.g., through convolutional kernels or patch-based attention mechanisms), they evaluate thousands of receptive fields where $C_{local} \neq 1$. 

We can thus interpret \cref{theorem:mu} as follows: in high-dimensional local spaces, the norm of a corrupted receptive field $\mathbf{x}_t$ concentrates around a deterministic function of $\bar{\alpha}_t$ whenever its local second moment $C_{local} \neq 1$. Through these spatial fluctuations, the network encounters abundant local regions where the time signal is strictly identifiable. The mapping $\mu_d$ serves as a theoretical construct to establish this existence; in practice, the network implicitly infers the timestep $\bar{\alpha}_t$ by aggregating these local variance disparities without needing explicit computation of $H_d$.

\section{Related Work}

Diffusion models~\cite{pmlr-v37-sohl-dickstein15} adopt an idea from physics, where a sample is gradually converted into a well-known distribution (e.g., Gaussian) by adding noise in a forward process. Then, a reverse process is learned to restore the original data from the noise. In DDPM~\cite{DDPM_2020}, the reverse process is parameterized by a neural network $\boldsymbol{\epsilon}_\theta$ that takes as input the noisy image $\mathbf{x}_t$ and the temporal signal $t$ to predict the noise $\boldsymbol{\epsilon}$ used to corrupt the original image $\mathbf{x}_0$. To inject the scalar $t$ into the model, DDPM uses positional encodings followed by a feedforward network. In contrast, DiT~\cite{Peebles_2023_ICCV} uses an AdaIN~\cite{Huang_2017_ICCV} to inject the time signal into the model.

Various works have explored improving the scheduling of noise scales in diffusion models~\cite{nichol2021improved, karras2022elucidating}, but the question of whether the time signal itself is necessary has not been thoroughly investigated. Our work fills this gap by providing both theoretical insight and empirical evidence on the 
redundancy of timestep embeddings in diffusion models.

\begin{figure}[!h]
\centering
\renewcommand{\arraystretch}{0.5}
\begin{tabular}{ll}
    \rotatebox{90}{ \small DiT} & 
    \includegraphics[width=0.92\columnwidth]{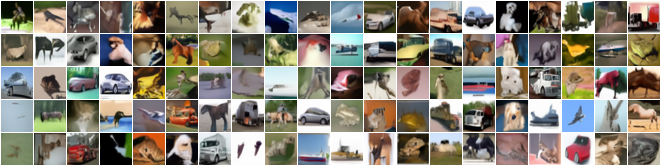} \\
    \noalign{\smallskip}\hdashline\noalign{\smallskip}
    
    \centering \rotatebox{90}{\small \quad DiT*} & 
    \includegraphics[width=0.92\columnwidth]{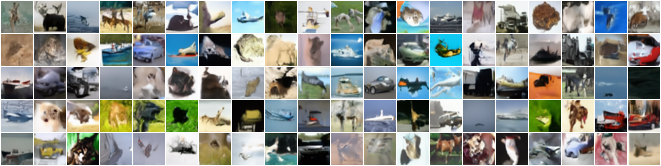} \\
    \noalign{\smallskip}\hdashline\noalign{\smallskip}
    
    \rotatebox{90}{\small \quad U-Net} & 
    \includegraphics[width=0.92\columnwidth]{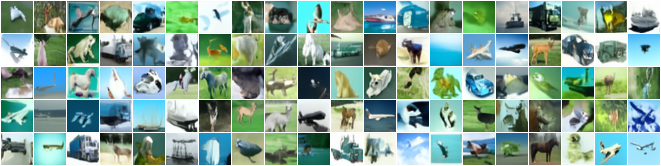} \\
    \noalign{\smallskip}\hdashline\noalign{\smallskip}
    
    \centering \rotatebox{90}{\small \quad U-Net*} & 
    \includegraphics[width=0.92\columnwidth]{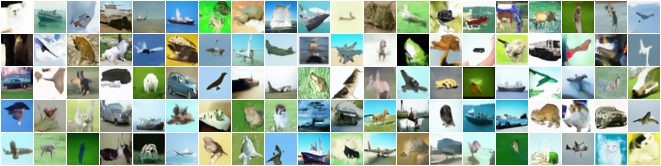} \\
\end{tabular}
\caption{\textbf{Qualitative comparison of conditional-class generation on CIFAR-10.} The first and third rows show samples from standard DiT and U-Net backbones with explicit timestep conditioning. The second and fourth rows (marked with *) display samples from our time-agnostic counterparts.}
\label{fig:cifar_comparison}
\end{figure}

Denoising Autoencoders (DAE)~\cite{denoising_auto_2008} and Generalized DAEs (GDAE)~\cite{10.5555/2999611.2999712} established early on that generative density modeling can be achieved through iterative corruption-reversal processes without strictly requiring explicit scale or corruption inputs, as exemplified by the Walkback algorithm. Furthermore, Denoising Score Matching (DSM)~\cite{vincent_2011} proved that the DAE objective is equivalent to estimating the score function $\nabla_{\mathbf{x}_t}\log p(\mathbf{x}_t)$ under a specific noise distribution. This fundamental connection directly underpins Noise Conditional Score Networks~\cite{NEURIPS2019_yang}.

Although the subsequent diffusion paradigm heavily adopted explicit time-conditioning, recent literature has increasingly explored modern noise-agnostic or blind denoising frameworks. For instance, \cite{sun2025noise} present a compelling case for viability of noise-unconditional diffusion models; however, their theoretical guarantees rely on heavily restrictive assumptions. Furthermore, the explicit derivations are exclusively tailored to the linear tranjectories of Flow Matching~\cite{lipman2023flowmatchinggenerativemodeling}. 

In \cite{sahraee2026geometry}, researchers argue that noise-prediction models like DDPM are structurally unstable for autonomous generation. In contrast, our framework proves their mathematical stability provided the data's empirical second moment concentrates around $C \neq 1$. Empirically, time-agnostic DDPMs converge robustly even in globally normalized spaces ($C \approx 1$). Concurrently, \cite{kadkhodaie2026blind} justify blind diffusion using the "blessings of dimensionality," assuming the data's intrinsic dimension $k$ is much smaller than the ambient dimension $d$. However, Latent Diffusion Models (LDMs) compress redundancies, forcing $k \approx d$, which breaks their theoretical guarantees. Furthermore, their proposed inference requires computing an explicit, artificial estimation of the noise variance on-the-fly at each denoising step to manually scale the reverse dynamics.

\section{Experiments}

\begin{table}[tb]
\caption{Evaluation of unconditional image generation CelebA dataset $128\times128$~\cite{Liu_2015_ICCV}. Numbers are reported for 50000 samples. *: models trained without the timestep embeddings.} 
\centering 
\begin{tabular}{@{}l@{\hskip 0.2in}c@{\hskip 0.2in}c@{\hskip 0.2in}c@{\hskip 0.2in}c@{}} 

\toprule 
Model & FID$\downarrow$ & Precision$\uparrow$ & Recall$\uparrow$ & Time (s)$\downarrow$\\ 
\midrule 
DiT & $\textbf{70.04}$ & $\textbf{0.68}$ & $0.13$ & $1.558\pm 0.046$\\ 
DiT* & $70.48$ & $0.65$ & $\textbf{0.14}$ & $\textbf{1.324}\pm 0.042$ \\
\midrule
U-Net & $69.82$ & $0.727$ & $0.10$ & $1.115\pm 0.047$\\
U-Net* & $\textbf{63.63}$ & $\textbf{0.729}$ & $\textbf{0.12}$ & $\textbf{1.089}\pm 0.042$\\
\bottomrule 
\end{tabular}
\label{table:celebametrics} 
\end{table}

\begin{table}[tb]
\caption{Evaluation of conditional-class image generation on CIFAR-10. Numbers are reported for 50000 samples. *: models trained without the timestep embeddings.} 
\centering 
\begin{tabular}{@{}l@{\hskip 0.2in}c@{\hskip 0.2in}c@{\hskip 0.2in}c@{\hskip 0.2in}c@{}} 

\toprule 
Model & FID$\downarrow$ & Precision$\uparrow$ & Recall$\uparrow$ & Time (s)$\downarrow$\\ 
\midrule 
DiT & $51.32$ & $0.68$ & $0.47$ & $8.553\pm 0.222$\\ 
DiT* & $\textbf{34.74}$ & $\textbf{0.75}$ & $\textbf{0.49}$ & $\textbf{8.498}\pm 0.420$\\
\midrule %
U-Net & $52.67$ & $0.66$ & $0.45$ & $7.105\pm  0.400$\\
U-Net* & $\textbf{41.18}$ & $ \textbf{0.68}$ & $\textbf{0.57}$ & $\textbf{7.048}\pm 0.368$\\
\bottomrule 
\end{tabular}
\label{table:cifar10metrics} 
\end{table}

In this section, we conduct an analysis of timestep embeddings in both U-Net and DiT architectures. We evaluate the impact of removing timestep embeddings on the quality of generated samples using standard metrics such as FID, Precision, and Recall. We also provide a qualitative comparison of the generated images to visually assess the differences in fidelity and diversity between models with and without timestep conditioning.

\subsubsection{Experimental Setup}

We adopt the common setup for measuring the Fréchet Inception Distance (FID)~\cite{housel_2017_fid}, following the standard procedure in the literature~\cite{DDPM_2020,karras_2019_stylegan}. For sample diversity and fidelity, we also report Precision and Recall metrics~\cite{NEURIPS2019_0234c510}. The setup for Precision and Recall is the same as in~\cite{NEURIPS2019_0234c510}. We train on the CelebA dataset with samples at $128\times128$ resolution, and we train on the CIFAR-10 dataset with samples at $32\times32$ resolution. For each dataset, we train two configurations for each architecture: one with timestep embeddings and another without them. For sampling time, we use the clock time when generating $100$ batches of $100$ samples each, and report the average time per batch along with the standard deviation across batches on a RTX 4070 GPU.

\subsubsection{Training Details}

The U-Net architecture is based on DDPM~\cite{DDPM_2020}, while the DiT architecture follows the DiT-S/2 configuration of~\cite{Peebles_2023_ICCV} with $252$ hidden units. For the CelebA dataset, we first train a VAE~\cite{kingma2013auto} to learn a lower-dimensional latent representation of the data. Then we train both U-Net and DiT in this latent space. For the CIFAR-10 dataset, we directly train both architectures in the data space. The diffusion model in CelebA is conducted in the latent space of the VAE, at $16\times16$ resolution. While in CIFAR-10, it is performed in the data space, at $32\times32$ resolution.
\subsection{Quantitative Results}

The quantitative results are summarized in \cref{table:celebametrics} and \cref{table:cifar10metrics}. In the CelebA dataset, DiT* (DiT without timestep embeddings) achieves an FID of $70.48$, which is slightly lower than the $70.04$ obtained by standard DiT. In terms of Precision and Recall, DiT* shows a minor decrease in Precision but an improvement in Recall. For U-Net, removal of timestep embeddings leads to a significant improvement in FID from $69.82$ to $63.63$, as well as increases in both Precision and Recall.

In the CIFAR-10 dataset, the impact of removing timestep embeddings is even more pronounced. DiT* achieves a substantial improvement in FID from $51.32$ to $34.74$, along with increases in both Precision and Recall. Similarly, U-Net* shows a significant improvement in FID from $52.67$ to $41.18$, Precision from $0.66$ to $0.68$, and Recall from $0.45$ to $0.57$. In summary, the quantitative results indicate that removing timestep embeddings can lead to similar performance in terms of FID, Precision, and Recall in the CelebA dataset, while it can lead to significant improvements in these metrics in the CIFAR-10 dataset.
\subsection{Qualitative Analysis}

The qualitative comparison in \cref{fig:celeba_comparison} reveals that for the CelebA dataset, both DiT and U-Net generate visually coherent faces with and without timestep embeddings. However, samples from default DiT and U-Net (with timestep embeddings) exhibit fewer artifacts. In contrast, DiT* and U-Net* show a greater diversity in terms of facial features, color distribution, and pose, which may contribute to improved FID scores. In the CIFAR-10 dataset, the differences in \cref{fig:cifar_comparison} are more difficult to discern visually, but the samples of DiT* and U-Net* appear to have more varied textures and colors, reflecting the redundancy of the temporal signal in guiding the generation process in a high-diversity dataset.

\subsection{Robustness and Performance Trade-offs}

In terms of computational performance, \cref{table:celebametrics} shows that removal of timestep embeddings leads to a reduction in inference time for both DiT and U-Net. This suggests that, when the diffusion process is performed in the latent space at $16\times16$ resolution, the temporal conditioning mechanism introduces additional computational overhead, and its removal can lead to more efficient sampling without compromising the quality of generated samples. In the CIFAR-10 dataset, \cref{table:cifar10metrics} shows a less pronounced reduction in inference time when removing timestep embeddings, which may be due to the fact that the diffusion process is performed in the data space at $32\times32$ resolution, where the computational overhead of temporal conditioning may be less significant. The performance trade-offs between visual fidelity, diversity, and computational efficiency highlight the potential benefits of re-evaluating the necessity of timestep embeddings in diffusion models. 

\section{Discussion}

\cref{table:celebametrics} shows a slight decrease in FID for DiT* compared to DiT, accompanied by a minor decrease in Precision and slight increase in Recall. This suggests that timestep embeddings may have a limited role in guiding the denoising process. In contrast, the improvement in FID for U-Net* relative to U-Net indicates that timestep embeddings may be less beneficial---and potentially even detrimental---for convolutional architectures on this dataset. Qualitative analysis in \cref{fig:celeba_comparison} supports this interpretation, as U-Net* samples exhibit more diversity in facial features and poses, which may contribute to improved FID scores.

In the CIFAR-10 dataset, substantial improvements in FID for both DiT* and U-Net* compared to their standard counterparts indicate that timestep embeddings may not provide additional benefit for conditional image generation in a high-diversity dataset. The increases in both Precision and Recall further indicate that removing timestep embeddings allows the models to generate samples that are not only more diverse but also more representative of the true data distribution. Qualitative analysis supports these interpretations. Samples from DiT* and U-Net* appear to preserve structural details and varied textures. Although the visual differences may be subtle as CIFAR-10 is more diverse and has lower resolution than CelebA, the improved FID scores imply that the models without timestep embeddings are better at capturing the underlying data distribution, leading to an overall improvement in sample quality.

Furthermore, the sampling time results indicate that removing timestep embeddings leads to a reduction in inference time for both DiT and U-Net across both datasets. This suggests that the temporal conditioning mechanism introduces additional computational overhead. The performance trade-offs between visual fidelity, diversity, and computational efficiency highlight the potential benefits of re-evaluating the necessity of timestep embeddings in diffusion models.

Theorems~\ref{theorem:main} and~\ref{theorem:mu} provide a theoretical framework for analyzing the potential redundancy of timestep embeddings in diffusion models. In particular, the existence of a measurable mapping $\mu$ capable of recovering $\bar{\alpha}_t$ from the corrupted input. This suggests that, under suitable conditions, a neural network may implicitly infer the noise level without the timestep signal. This occurs when the per-component second moment of the data distribution concentrates around a constant $C \neq 1$. This explains the stable time-agnostic performance on CIFAR-10, where diffusion occurs in unnormalized pixel space. It also accounts for the weakened global time identifiability on CelebA, where diffusion operates in a VAE latent space normalized to approximate unit variance ($C \approx 1$).

However, as evidenced in \cref{table:celebametrics}, the time-agnostic U-Net exhibits a noticeable structural resilience compared to its DiT counterpart, even within this normalized latent space. We hypothesize that this architectural disparity can be explained through the lens of the \textit{Local Identifiability} discussed earlier.

Although the global variance is normalized, natural spatial inhomogeneity ensures that localized sub-structures frequently deviate from unity. We postulate that the U-Net's dense, overlapping sliding windows effectively capture these local variance fluctuations, preserving regions where the noise scale remains identifiable. Conversely, the DiT's sparse, non-overlapping patches likely average out these crucial spatial cues, making it more susceptible to the loss of explicit temporal conditioning. Validating this interplay between architectural inductive biases and localized time inference remains an important direction for future empirical work.

Validating this interplay between architectural inductive biases and localized time inference remains an important direction for future empirical work. Furthermore, while our current study focuses on low-to-medium resolution datasets, the scalability of time-agnostic architectures to higher resolution synthesis and more complex manifolds remains to be verified. Finally, extending this theoretical framework beyond standard diffusion processes to more recent continuous-time formulations, such as Flow Matching~\cite{lipman2023flowmatchinggenerativemodeling} and general SDE, based generative~\cite{song2021scorebased} models—represents a highly promising avenue for future research.



%
%
\bibliographystyle{splncs04}
\bibliography{main}
\end{document}